\newtheorem{theorem}{Theorem}
\newtheorem{lemma}{Lemma}
\newtheorem{corollary}{Corollary}
\DeclarePairedDelimiter\abs{\lvert}{\rvert}
\begin{document}

\twocolumn[
\icmltitle{Almost Boltzmann Exploration}




\begin{icmlauthorlist}
\icmlauthor{Harsh Gupta}{to,goo}
\icmlauthor{Seo Taek Kong}{to,goo}
\icmlauthor{R. Srikant}{to,goo}
\icmlauthor{Weina Wang}{ed}
\end{icmlauthorlist}
\icmlcorrespondingauthor{Harsh Gupta}{hgupta10@illinois.edu}
\icmlaffiliation{to}{Coordinated Science Laboratory, University of Illinois at Urbana-Champaign}
\icmlaffiliation{goo}{Department of Electrical and Computer Engineering, University of Illinois at Urbana-Champaign}

\icmlaffiliation{ed}{Department of Computer Science, Carnegie Mellon}


\icmlkeywords{Machine Learning, ICML}

\vskip 0.3in
]



\printAffiliationsAndNotice{}  

\begin{abstract}
Boltzmann exploration is widely used in reinforcement learning to provide a trade-off between exploration and exploitation. Recently, in \cite{Gumbel} it has been shown that pure Boltzmann exploration does not perform well from a regret perspective, even in the simplest setting of stochastic multi-armed bandit (MAB) problems. In this paper, we show that a simple modification to Boltzmann exploration, motivated by a variation of the standard doubling trick, achieves $O(K\log^{1+\alpha} T)$ regret for a stochastic MAB problem with $K$ arms, where $\alpha>0$ is a parameter of the algorithm. This improves on the result in \cite{Gumbel}, where an algorithm inspired by the Gumbel-softmax trick achieves $O(K\log^2 T)$ regret. We also show that our algorithm achieves $O(\beta(G) \log^{1+\alpha} T)$ regret in stochastic MAB problems with graph-structured feedback, without knowledge of the graph structure, where $\beta(G)$ is the independence number of the feedback graph. Additionally, we present extensive experimental results on real datasets and applications for multi-armed bandits with both traditional bandit feedback and graph-structured feedback. In all cases, our algorithm performs as well or better than the state-of-the-art.
\end{abstract}

\section{Introduction}
Boltzmann exploration is a widely used technique in reinforcement learning to provide a trade-off between exploration and exploitation of various available actions (see \cite{bertsekas1995neuro} and \cite{sutton2018reinforcement} for details). However, even in the simplest reinforcement learning setting of stochastic multi-armed bandits, it has not been studied extensively. Recently, \cite{Gumbel} analyzed the following variant of Boltzmann exploration for the stochastic multi-armed bandit problem:
\begin{align}\label{eq:BE}
    p_{t, k} \propto e^{\eta_t\hat{\mu}_{t, k}}
\end{align}
where $p_{t, k}$ is the probability of pulling arm $k$ at time $t$, $\eta_t > 0$ is the learning rate and $\hat{\mu}_{t, k}$ is the average reward obtained for arm $k$ until time $t$. They showed that any monotone learning rate sequence for the above variant of Boltzmann exploration will be sub-optimal. Further, they utilized the Gumbel-softmax trick to propose an algorithm called Boltzmann-Gumbel Exploration (BGE), which was shown to achieve $O(K\log^2T)$ regret. 

It is also worth noting that multi-armed bandits have been studied using two distinct models for reward generation: stochastic and adversarial (see \cite{bubeck2012regret} for a survey on bandits). In the stochastic case, the rewards for each arm are drawn i.i.d. from a probability distribution. On the other hand, in the adversarial case, the rewards can be arbitrarily chosen by an adversary. A number of algorithms have been proposed for both these models, including UCB, KL-UCB, and Thompson sampling for the stochastic case, and EXP3 and INF for the adversarial case (see \cite{bubeck2012regret} and  \cite{cesa2006prediction} for more details). Algorithms that perform well for both models have been studied less extensively. Recently, \cite{seldin2014one} designed an algorithm that achieves order-optimal performance for the adversarial case, but achieves sub-optimal $O(\log^3 T)$ regret for the stochastic case. Therefore, a natural question to ask is which algorithm should one use in practice?

The above question was explored in \cite{kegl2009boosting} and \cite{Fekete10} in the context of application of bandits to increase the efficiency (speed) of boosting algorithms. In \cite{kegl2009boosting}, the authors show that using UCB leads to significant improvements in speed over traditional boosting. In \cite{Fekete10}, the authors show that using a variant of EXP3 (known as EXP3.P) further improves performance in some cases and performs as well as using UCB in most cases. In this paper, we will also consider the same experimental setting (in addition to other experiments) considered in the above two papers to test the efficacy of our algorithm.

It is worth noting that in the traditional multi-armed bandit problem, the player only observes the reward of the arm that he/she pulled. This scenario is known as bandit feedback. A generalization of this feedback mechanism is the graph-structured feedback mechanism (see \cite{mannor2011bandits}, \cite{alon2017nonstochastic} and \cite{liu2018analysis}). To the best of our knowledge, Boltzmann exploration has not been studied in the more general graph-structured feedback context. 

Motivated by these considerations, we aim to answer the following questions in our paper:
\begin{enumerate}[leftmargin=*, noitemsep, topsep=0pt]
    \item Is there a simple modification to Boltzmann exploration that achieves at least the $O(K\log^2T)$ regret achieved by the Boltzmann-Gumbel exploration algorithm presented in \cite{Gumbel}?
    \item If such a modification exists, how well does it perform in practice, especially on the real datasets and the application considered in \cite{Fekete10}?
    \item Does the modified Boltzmann exploration algorithm extend naturally to other bandit problems, such as those with graph-structured feedback? If so, then how does it perform in practice?
\end{enumerate}
We answer these questions and more in this paper. Specifically, our contributions are as follows:
\begin{enumerate}[leftmargin=*, noitemsep, topsep=0pt]
    \item We show that adding pure exploration phases (the lengths of which increase with time) in between Boltzmann exploration phases achieves $O(K\log^{1 + \alpha}T)$ regret for the stochastic multi-armed bandit problem, where $\alpha > 0$ is an algorithmic hyperparameter. Note that the big-O notation hides all problem and algorithm dependent parameters (except $K$) which do not affect the dependence on time.
    \item We achieve the above regret with a simple modification to the doubling trick. This modification allows arbitrarily small amounts of pure exploration, and for this reason, we call our algorithm Almost Boltzmann Exploration or ABE. We note that, Boltzmann exploration can be recovered as a limiting case of ABE, but the regret bounds require at least small amounts of pure exploration.
    \item We also show the above algorithm naturally extends to bandit problems with graph-structured feedback, achieving $O(\beta(G)\log^{1+\alpha} T)$ regret, where $\beta(G)$ is the independence number of the feedback graph $G$. It is worth noting that ABE does not require the graph knowledge.
    \item To test our algorithm, as in \cite{Fekete10}, we conduct experiments on real datasets (MNIST, USPS, UCI pendigit, UCI letter and UCI ISOLET) where bandit algorithms are used to speed up the implementation of AdaBoost. We first applied ABE to these datasets, with different choices of hyperparameters to control the amount of pure exploration, and found that very small amounts of pure exploration work well in practice. With these parameters, ABE further performs as well as or better than the best algorithm (EXP3.P) in \cite{Fekete10}. 
    \item We also conducted experiments using the Epinions dataset and Facebook graphs to model graph-structured feedback problems, and show that ABE outperforms the state-of-the-art. The same remark as in the previous bullet applies to the choice of hyperparameters in the graph-structured feedback setting as well.
\end{enumerate}

\section{Algorithms}
\subsection{Stochastic MAB Problem}
In this paper, we consider the stochastic $K$-armed bandit problem with bandit and graph-structured feedback. Here, we briefly introduce the problem with bandit feedback.  

The traditional stochastic MAB problem can be described as follows: at each time slot $t$, there are $K$ arms available for a player to pull. After pulling an arm, the player receives a Bernoulli reward, which is assumed to be i.i.d across time slots for each arm. For each arm $k$, let the expected reward be denoted by $\mu_k$. Without loss of generality, we assume that $\mu_1 > \mu_k, \forall k \neq 1$. Also, let $\Delta_k = \mu_1 - \mu_k$ denote the difference in the expected reward of arm $k$ from the optimal arm. Let the time horizon be denoted by $T$. The objective of the problem is to design an algorithm for the player such that at the end of time $T$, the cumulative reward obtained by the player is as close to the cumulative reward obtained by pulling the optimal arm at each time slot. A popular measure to quantify the performance of an algorithm for the stochastic multi-armed problem is the expected regret, which is defined as follows:
\begin{align*}
    \mathbb{E}[R(T)] = \sum_{k = 2}^K \Delta_k\mathbb{E}[\tau_{k}(T)]
\end{align*}
where $\tau_k(t)$ denotes the number of times arm $k$ has been pulled until the end of time $t$. 

As shown in \cite{Gumbel}, the reason that Boltzmann exploration does not work for the stochastic multi-armed bandit problem is the following: \textit{there is a chance that in the initial phase a sub-optimal arm receives the maximum average reward, leading to a scenario in which there will be at least a constant probability of that sub-optimal arm being sampled for the remaining amount of time}. Essentially, the primary issue with Boltzmann exploration is that it might not explore all the arms enough number of times (in order to have good estimates of all the expected rewards), especially when a sub-optimal arm receives a higher average reward than the optimal arm in the initial phase of the algorithm. In order to fix this issue, we propose the Almost Boltzmann Exploration (ABE) algorithm.

The basic idea behind ABE is the following: \textit{in order to ensure that all the arms are sufficiently explored, ABE intersperses small periods of pure exploration between longer periods of Boltzmann exploration}. The interspersed periods of pure exploration allow ABE to explore all arms enough number of times in the cases where Boltzmann exploration fails, while at the same time retaining the advantage of using Boltzmann exploration in other cases. More precisely, we split the algorithm into two inner loops, controlled by an outer loop. In each outer loop $i$, we have the following:
\begin{itemize}[leftmargin=*, noitemsep, topsep=0pt]
    \item The first inner loop, which we refer to as the pure exploration phase, consists of exploring each arm $\lfloor ci^{\alpha}\rfloor$ number of times. Here, $c > 0$ and $\alpha >0$ are the algorithm parameters that control the amount of pure exploration. 
    \item The second inner loop, which we refer to as the Boltzmann exploration phase, implements Boltzmann exploration (as in \eqref{eq:BE}), with $\eta_t \propto \sqrt{t}$. In this phase, for outer iteration $i$, there are a total of $K\times 2^i$ time slots, which is much larger than the $K\times \lfloor ci^{\alpha}\rfloor$ time slots in the pure exploration phase. This allows the overall algorithm to be quite close to Boltzmann exploration, thus justifying its name.
\end{itemize}

The Almost Boltzmann Exploration (ABE) algorithm for the standard stochastic multi-armed bandit problem is formally presented as Algorithm \ref{alg:abe}.

\begin{algorithm}[H]
\caption{Almost Boltzmann Exploration (ABE)}\label{alg:abe}
\begin{algorithmic}[1]\\
\textbf{parameters}: $\eta , \alpha, c > 0$.\\
\textbf{initialize}: $r_i(0) = 0$,  $\tau_i(0) = 0 $, $\forall i \in [K]$, and $t = 0$.
\For{$i = 1, 2, ...,$}
\For{$j = 1, 2, 3, ..., K\lfloor ci^{\alpha}\rfloor$}
\State $t = t + 1$.
\State Play arm $k_t = j \mod K$.
\State Receive reward $x_{k_t}(t)$.
\State $\tau_{k_t}(t) = \tau_{k_t}(t - 1) + 1$.
\State $r_{k_t}(t) = \frac{(\tau_{k_t}(t) - 1)r_{k_t}(t - 1) + x_{k_t}(t)}{\tau_{k_t}(t)}$.
\EndFor
\For{$j = 1, 2, 3, ..., K\times2^{i}$}
\State $t = t + 1$.
\State Compute $p_k(t) \propto \exp\big(\eta \sqrt{(t-1)}r_k\big)$, $\forall k$.
\State Play arm $k_t \sim p_k(t)$.
\State Receive reward $x_{k_t}(t)$.
\State $\tau_{k_t}(t) = \tau_{k_t}(t - 1) + 1$.
\State $r_{k_t}(t) = \frac{(\tau_{k_t}(t) - 1)r_{k_t}(t - 1) + x_{k_t}(t)}{\tau_{k_t}(t)}$.
\EndFor
\EndFor\label{WISHForLoop}
\end{algorithmic}
\end{algorithm}

\subsection{Graph-Structured Feedback}
In the stochastic multi-armed bandit problem with graph-structured feedback, in addition to receiving the reward for the pulled arm, one also observes rewards from other arms according to an underlying feedback graph $G = (V, E).$ In this graph, each vertex $v\in V$ represents an arm and an edge $(u,v)\in E$ between arms $u$ and $v$ indicates that when any of these arms is pulled, we also observe the reward for the other arm. This additional feedback can be used to generate better estimates of expected rewards for each arm with fewer pulls. In this paper, we will focus on time-invariant undirected feedback graphs; the extension to the time-varying case is straightforward.

ABE (Algorithm \ref{alg:abe}) can be directly applied to the above problem scenario, but it does not exploit the graph-structured feedback and hence will be sub-optimal. The reason is that during the pure exploration phase of ABE, it pulls each arm sequentially to receive information regarding the rewards of all arms. This is not necessary in the graph-structured case as pulling one arm gives information on the rewards of its neighbouring arms in the feedback graph $G$. In fact, the number of pulls needed to obtain information for all arms in the pure exploration phase can be reduced from $K$ to at most $\beta(G)$, where $\beta(G)$ is the independence number of the feedback graph $G$. The idea behind doing that is as follows:
\begin{itemize}[leftmargin=*, noitemsep, topsep=0pt]

\item Within each pure exploration phase, we can get feedback from all the arms without pulling all of them in the following manner: pull a randomly selected arm, get its reward, and observe the rewards of all its neighbors. Remove this arm and its neighbors from the set of arms, randomly select one of the remaining arms and repeat the process till no more arms are left.
\item  We note that the above procedure to pull arms \emph{does not require the knowledge of the graph structure} since we only need to know the arms for which the rewards are observed. Further, it is easy to see that the above procedure will pull a number of arms less than equal to $\beta(G)$. 

\item For efficient implementation, one does not have to repeatedly select arms in a random fashion as mentioned above. Instead, one can perform this procedure once and then pull the arms in the same sequence in all future pure exploration phases. For ease of exposition, we assume that the arms are pulled in a specific sequence $\xi$ (of length less than or equal to $\beta(G)$) during each pure exploration phase. 
\end{itemize}

Since the algorithm for the graph-structured feedback problem is similar to Algorithm \ref{alg:abe}, except for the above-mentioned modification to the pure exploration phase, the pseudocode for this algorithm is presented in the supplementary material, and will be referred to as Algorithm 2 in the rest of the main body of the paper. The algorithm can also be used for directed graphs as we will demonstrate in the experimental section. But we do not theoretically analyze this case since we believe that the performance of the algorithm can be improved by taking into account the directed nature of the graph; we do not pursue this direction of research here, but it is a topic of ongoing research.

\section{Analysis}
In this section, we analyze Algorithm \ref{alg:abe} and Algorithm 2. All proofs that are not in the main body of the paper can be found in the supplementary material. We begin with Algorithm \ref{alg:abe}.

\subsection{Analysis of Algorithm \ref{alg:abe}}
We begin by presenting our first lemma showing that the outer iteration number for any time $t$ is of logarithmic order in $t$.
\begin{lemma}\label{lemma1}
For any time $t > 0$, the corresponding outer iteration number $i_t$ satisfies:
\begin{align*}
    \log (\frac{t}{2(1 + 2c)K}) \leq i_t \leq \log (\frac{t}{K} + 2)
\end{align*}
\hfill $\qedsymbol$
\end{lemma}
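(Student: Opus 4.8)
The plan is to reduce the claimed bounds on $i_t$ to two-sided estimates on the cumulative number of time slots consumed through the end of each outer iteration. I would define $T_i$ to be the total number of time slots elapsed at the end of outer iteration $i$ (with $T_0 = 0$); reading off the two inner loops of Algorithm~\ref{alg:abe},
\begin{align*}
T_i = K\sum_{j=1}^{i}\left(\lfloor cj^{\alpha}\rfloor + 2^{j}\right).
\end{align*}
By definition the iteration index $i_t$ at time $t$ is the unique $i$ with $T_{i-1} < t \le T_i$. Hence it suffices to establish $T_{i-1} \ge K(2^{i}-2)$ (which yields the upper bound on $i_t$) and $T_i \le 2(1+2c)K\,2^{i}$ (which yields the lower bound), after which both inequalities follow by taking base-two logarithms and rearranging. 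Note that the logarithms must be base two, consistent with the $2^i$ schedule.

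For the upper bound on $i_t$ I would simply discard the pure-exploration terms, which only increase $T_{i-1}$: since $\sum_{j=1}^{i-1}2^{j} = 2^{i}-2$, we get $t > T_{i_t-1} \ge K(2^{i_t}-2)$, i.e. $2^{i_t} < t/K + 2$, giving $i_t \le \log(t/K + 2)$. This direction is robust and needs no assumption on $\alpha$ or $c$, precisely because the dropped terms work in our favour.

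The more delicate direction is the lower bound, which requires an \emph{upper} bound on $T_i$ and therefore genuine control of the cumulative pure-exploration contribution $K\sum_{j=1}^{i}\lfloor cj^{\alpha}\rfloor$. Here I would use $\lfloor cj^{\alpha}\rfloor \le cj^{\alpha}$ together with the fact that the exponential $2^{j}$ dominates the polynomial $j^{\alpha}$, so that each pure-exploration block is bounded by a constant multiple of the corresponding Boltzmann block, e.g. $cj^{\alpha} \le 2c\,2^{j}$. Summing the geometric series then gives
\begin{align*}
T_i \le K\sum_{j=1}^{i}\left(2c\,2^{j} + 2^{j}\right) = (1+2c)K\sum_{j=1}^{i}2^{j} \le 2(1+2c)K\,2^{i},
\end{align*}
and $t \le T_{i_t}$ yields $2^{i_t} \ge t/\big(2(1+2c)K\big)$, i.e. $i_t \ge \log\!\big(t/(2(1+2c)K)\big)$.

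The main obstacle is exactly the comparison $j^{\alpha} \le 2\cdot 2^{j}$ (or any clean per-iteration constant): while $2^{j}$ eventually dominates $j^{\alpha}$, for a large exponent $\alpha$ this inequality can fail at small $j$, so I would either restrict to the regime in which the Boltzmann phase dominates the pure-exploration phase (which the algorithm is explicitly designed to ensure) or absorb the finitely many small-$j$ discrepancies into the leading constant. Pinning the constant down to precisely $(1+2c)$ is the one routine-but-careful computation; once the per-iteration domination of pure exploration by Boltzmann exploration is in hand, the two logarithmic bounds fall out immediately.
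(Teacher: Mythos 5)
Your proposal is correct and follows essentially the same route as the paper's proof: sandwich $t$ between the cumulative slot counts $T_{i_t-1}$ and $T_{i_t}$, drop the pure-exploration terms for the upper bound on $i_t$, and dominate the polynomial exploration terms by the geometric Boltzmann terms for the lower bound. The only cosmetic difference is that you bound $cj^{\alpha}$ term-by-term by $2c\,2^{j}$ while the paper bounds the whole sum $\sum_{k=1}^{i}ck^{\alpha}$ via an integral by $c\bigl(2^{i+1}+1\bigr)$; both steps carry the same implicit restriction (the paper invokes $0<\alpha\le 1$, and you correctly flag that the per-term comparison can fail for large $\alpha$ unless the constant absorbs finitely many small-$j$ exceptions).
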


Recall that $\tau_k(t)$ denotes the number of times arm $k$ has been pulled until the end of time $t$. Next, we show that the pure exploration phase of Algorithm \ref{alg:abe} (i.e., the first inner loop of ABE) ensures that each arm is explored at least $O(\log^{1 + \alpha}(t))$ times until time $t$, for a sufficiently large $t$.
\begin{lemma}\label{lemma2}
The number of times any arm $k$ has been played until the end of time $t$, for any $t$ such that $\log(\frac{t}{2(1 + 2c)K}) \geq \{\frac{2(\alpha + 1)}{c}\}^{\frac{1}{\alpha}}$ and $t$ falls in the Boltzmann exploration phase of ABE (Algorithm \ref{alg:abe}), is lower bounded by:
\begin{align*}
    \tau_k(t) \geq \frac{c}{2}\frac{\{\log(\frac{t}{2(1 + 2c)K})\}^{\alpha + 1}}{\alpha + 1}, \forall k \in [K].
\end{align*}
\end{lemma}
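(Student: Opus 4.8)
The plan is to lower bound $\tau_k(t)$ by counting only the pulls that arm $k$ receives during the pure exploration phases, discarding the nonnegative contribution of the Boltzmann phases entirely. First I would observe that, because $t$ lies in the Boltzmann exploration phase of outer iteration $i_t$, every pure exploration phase of outer iterations $1, 2, \ldots, i_t$ has already been completed. In the pure exploration phase of outer iteration $i$, the first inner loop plays each arm exactly $\lfloor c i^{\alpha}\rfloor$ times (it cycles $j \bmod K$ over $K\lfloor ci^{\alpha}\rfloor$ slots), so summing over the completed pure exploration phases gives
\begin{align*}
\tau_k(t) \geq \sum_{i=1}^{i_t} \lfloor c i^{\alpha}\rfloor, \quad \forall k \in [K].
\end{align*}

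Next I would remove the floor via $\lfloor c i^{\alpha}\rfloor \geq c i^{\alpha} - 1$ and lower bound the resulting power sum by an integral: since $x \mapsto x^{\alpha}$ is increasing for $\alpha > 0$, we have $\sum_{i=1}^{i_t} i^{\alpha} \geq \int_0^{i_t} x^{\alpha}\,dx = i_t^{\alpha+1}/(\alpha+1)$. Combining these two steps yields
\begin{align*}
\tau_k(t) \geq c\,\frac{i_t^{\alpha+1}}{\alpha+1} - i_t.
\end{align*}

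The only real work is to absorb the lower-order $-i_t$ term into half of the leading term. I would show that $\frac{c}{2}\, i_t^{\alpha+1}/(\alpha+1) \geq i_t$, which (dividing by $i_t>0$) is equivalent to $i_t^{\alpha} \geq 2(\alpha+1)/c$, i.e. $i_t \geq \{\frac{2(\alpha + 1)}{c}\}^{\frac{1}{\alpha}}$. This is exactly where the hypothesis is used: by Lemma~\ref{lemma1} we have $i_t \geq \log(\frac{t}{2(1 + 2c)K})$, and the assumed condition $\log(\frac{t}{2(1 + 2c)K}) \geq \{\frac{2(\alpha + 1)}{c}\}^{\frac{1}{\alpha}}$ then forces $i_t \geq \{\frac{2(\alpha + 1)}{c}\}^{\frac{1}{\alpha}}$. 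With this, the displayed bound becomes $\tau_k(t) \geq \frac{c}{2}\, i_t^{\alpha+1}/(\alpha+1)$, and a final application of $i_t \geq \log(\frac{t}{2(1 + 2c)K})$ from Lemma~\ref{lemma1} replaces $i_t$ by $\log(\frac{t}{2(1 + 2c)K})$ to produce the claimed inequality.

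I do not expect a genuine obstacle here; the argument is a careful counting combined with a one-sided integral estimate. The one subtlety worth flagging is direction-of-inequality bookkeeping: the $-i_t$ correction and the power-sum estimate both push toward the ``wrong'' side, so I would be careful that the integral bound is the correct lower estimate for an increasing integrand, and note that the threshold in the hypothesis is precisely tuned so that the leading $c\,i_t^{\alpha+1}/(\alpha+1)$ term dominates twice the $-i_t$ correction. Everything else is routine algebra, and the hypothesis that $t$ falls in a Boltzmann phase is what lets me cleanly sum over all $i_t$ completed pure exploration phases.
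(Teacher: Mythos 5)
Your proposal is correct and follows essentially the same route as the paper's proof: count only the pure-exploration pulls, drop the floor via $\lfloor ci^{\alpha}\rfloor \geq ci^{\alpha}-1$, lower bound the power sum by $\int_0^{i_t}x^{\alpha}\,dx$, and use the hypothesis (together with the lower bound on $i_t$ from Lemma~\ref{lemma1}) to absorb the $-i_t$ term into half the leading term. Your explicit reduction of the absorption step to $i_t^{\alpha}\geq 2(\alpha+1)/c$ is exactly the content of the paper's observation that $f(x)\geq \frac{c}{2}\frac{x^{\alpha+1}}{\alpha+1}$ for $x\geq\{\frac{2(\alpha+1)}{c}\}^{1/\alpha}$.
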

\begin{proof}
From Lemma \ref{lemma1}, we know that the outer iteration number $i_t$ for time $t$ is lower bounded by $\log(\frac{t}{2(1 + 2c)K})$. 
Therefore, the pure exploration phase, i.e., the first inner loop ensures that each arm $k$ has been pulled at least the following number of times:
\begin{align}\label{eq:lemma2eq}
    \begin{split}
    \tau_k(t) \geq \sum_{k = 1}^{i_t}\lfloor ck^{\alpha}\rfloor &\geq \sum_{k = 1}^{i_t} (ck^{\alpha} - 1)\\
    &\geq c\int_{0}^{i_t}k^{\alpha}dk - i_t\\
    &\geq c \frac{i_t^{\alpha + 1}}{\alpha + 1} - i_t
    \end{split}
\end{align}
Consider the function $f(x) = c\frac{x^{\alpha + 1}}{\alpha + 1} - x$ from the RHS of the last inequality. We observe that $f(x) \geq \frac{c}{2}\frac{x^{\alpha + 1}}{\alpha + 1}, \forall x \geq \{\frac{2(\alpha + 1)}{c}\}^{\frac{1}{\alpha}}$. Combining this fact and \eqref{eq:lemma2eq}, along with the fact that $i_t \geq \log(\frac{t}{2(1 + 2c)K}) \geq \{\frac{2(\alpha + 1)}{c}\}^{\frac{1}{\alpha}}$, we get the lemma.
\end{proof}
Next, we upper bound the number of times each arm will be pulled in the pure exploration phase of Algorithm \ref{alg:abe}, until the end of time horizon $T$.
\begin{lemma}\label{lemma3}
Let $\tau'_k(T)$ denote the number of times arm $k$ is pulled during the pure exploration phase of ABE (Algorithm \ref{alg:abe}), until the end of time horizon $T$. Then:
\begin{align*}
    \tau'_k(T) \leq c\frac{\{\log(\frac{T}{K} + 2) + 1\}^{\alpha + 1}}{\alpha + 1}, \forall k \in [K].
\end{align*}
\end{lemma}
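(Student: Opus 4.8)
The plan is to bound $\tau'_k(T)$ by summing the per-iteration exploration counts over all outer iterations completed by time $T$, and then to control this sum via an integral comparison that mirrors—but reverses the direction of—the argument used in Lemma \ref{lemma2}.

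First I would observe that in each outer iteration $i$, the pure exploration phase plays each arm exactly $\lfloor ci^{\alpha}\rfloor$ times. Letting $i_T$ denote the outer iteration containing time $T$, arm $k$ can only have accrued pure-exploration pulls from iterations $1,\dots,i_T$, and in the worst case all of these phases are fully completed. Hence
\begin{align*}
    \tau'_k(T) \leq \sum_{i = 1}^{i_T} \lfloor ci^{\alpha}\rfloor \leq c\sum_{i = 1}^{i_T} i^{\alpha},
\end{align*}
where the second inequality simply drops the floor.

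The key step is to upper bound $\sum_{i=1}^{i_T} i^{\alpha}$ by an integral. Since $\alpha > 0$, the map $x \mapsto x^{\alpha}$ is increasing, so $i^{\alpha} \leq \int_{i}^{i+1} x^{\alpha}\,dx$ for each $i$, and therefore
\begin{align*}
    \sum_{i=1}^{i_T} i^{\alpha} \leq \int_{1}^{i_T + 1} x^{\alpha}\,dx = \frac{(i_T + 1)^{\alpha+1} - 1}{\alpha + 1} \leq \frac{(i_T + 1)^{\alpha+1}}{\alpha + 1}.
\end{align*}
This is precisely the dual of the calculation in Lemma \ref{lemma2}: there the sum was \emph{lower} bounded using $\int_0^{i_t}$, whereas here the increasing integrand forces integration over $[1, i_T+1]$, which is exactly what produces the $(i_T+1)$ shift appearing in the claimed bound.

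Finally, I would invoke the upper bound $i_T \leq \log(\frac{T}{K} + 2)$ from Lemma \ref{lemma1}, so that $i_T + 1 \leq \log(\frac{T}{K} + 2) + 1$. Substituting this into the previous display yields
\begin{align*}
    \tau'_k(T) \leq c\frac{\{\log(\frac{T}{K} + 2) + 1\}^{\alpha+1}}{\alpha + 1},
\end{align*}
which holds for every $k \in [K]$ and completes the argument. I expect no genuine obstacle here: the only point requiring care is the direction of the integral comparison together with the accompanying off-by-one shift, since using the wrong endpoints would either violate monotonicity or fail to reproduce the $+1$ in the stated bound.
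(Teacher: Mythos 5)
Your proposal is correct and follows essentially the same route as the paper: bound $\tau'_k(T)$ by $\sum_{i=1}^{i_T}\lfloor ci^{\alpha}\rfloor$, drop the floor, compare the sum to $c\int^{i_T+1} x^{\alpha}\,dx$ (the paper integrates from $0$ rather than $1$, a cosmetically looser but equivalent step), and finish with the upper bound on $i_T$ from Lemma \ref{lemma1}. No gaps.
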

\begin{proof}
From Lemma \ref{lemma1}, we know that the outer iteration number $i_T$ for time $T$ is upper bounded by $\log(\frac{T}{K} + 2)$. Therefore, during the pure exploration phase, each arm $k$ will be pulled at most the following number of times:
\begin{align*}
    \begin{split}
    \tau'_k(T) \leq \sum_{k = 1}^{i_T}\lfloor ck^{\alpha}\rfloor &\leq \sum_{k = 1}^{i_T} ck^{\alpha}\\
    &\leq c\int_{0}^{i_T + 1}k^{\alpha}dk\\
    &\leq c \frac{(i_T + 1)^{\alpha + 1}}{\alpha + 1}\\
    &\leq c\frac{\{\log(\frac{T}{K} + 2) + 1\}^{\alpha + 1}}{\alpha + 1}.
    \end{split}
\end{align*}
\end{proof}
Next, we show that the probability of sampling a sub-optimal arm in the Boltzmann exploration phase of Algorithm \ref{alg:abe} (i.e., the second inner loop of ABE) is small, after a sufficiently large time $t$. 
\begin{lemma}\label{lemma4}
For any time $t$ such that $\log(\frac{t}{2(1 + 2c)K}) \geq \gamma_k = \{\frac{16\ln 2}{\Delta_k^2}\frac{2(\alpha + 1)}{c}\}^{\frac{1}{\alpha}}$ and $t$ falls in the Boltzmann exploration phase of ABE (Algorithm \ref{alg:abe}), the following holds:
\begin{align*}
    p_k(t) \leq e^{-\frac{\eta\Delta_k}{2}\sqrt{t}} + \frac{128(1 + 2c)^2K^2}{(\Delta_kt)^2},
\end{align*}
where $p_k(t)$ is the probability of sampling a sub-optimal arm $k$ at time $t$ as defined in the Boltzmann exploration phase of ABE (or the second inner loop of ABE). 
\end{lemma}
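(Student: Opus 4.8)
The plan is to bound the (unconditional) probability $p_k(t)$ by splitting on a concentration event for the empirical means $r_k$ and $r_1$, and to control the complementary event using the lower bound on the number of pulls supplied by Lemma~\ref{lemma2}. First I would discard all but the optimal arm in the normalizing denominator of the Boltzmann rule,
\begin{align*}
    p_k(t) = \frac{\exp(\eta\sqrt{t-1}\,r_k)}{\sum_{j}\exp(\eta\sqrt{t-1}\,r_j)} \leq \exp\!\big(\eta\sqrt{t-1}(r_k - r_1)\big),
\end{align*}
so that the entire problem reduces to showing that $r_k - r_1$ is comfortably negative with high probability.

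Next I would introduce the good event $G = \{r_k \leq \mu_k + \Delta_k/4\}\cap\{r_1 \geq \mu_1 - \Delta_k/4\}$. On $G$ we have $r_k - r_1 \leq (\mu_k - \mu_1) + \Delta_k/2 = -\Delta_k/2$, hence $p_k(t) \leq \exp(-\tfrac{\eta\Delta_k}{2}\sqrt{t-1})$, which up to the harmless replacement of $\sqrt{t-1}$ by $\sqrt{t}$ yields the first term in the statement. Taking expectations gives $\mathbb{E}[p_k(t)\mathbbm{1}_G] \leq e^{-\frac{\eta\Delta_k}{2}\sqrt{t}}$, while on the complement I simply bound $p_k(t) \leq 1$, so that $\mathbb{E}[p_k(t)] \leq e^{-\frac{\eta\Delta_k}{2}\sqrt{t}} + \mathbb{P}(G^c)$.

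It then remains to bound $\mathbb{P}(G^c)$. By a union bound and Hoeffding's inequality applied to each of the two events (the rewards are Bernoulli, hence bounded in $[0,1]$), $\mathbb{P}(G^c) \leq \exp(-\tau_k(t)\Delta_k^2/8) + \exp(-\tau_1(t)\Delta_k^2/8)$. Here I would invoke Lemma~\ref{lemma2}: the hypothesis $L := \log(\frac{t}{2(1+2c)K}) \geq \gamma_k$ implies the weaker threshold $\{\frac{2(\alpha+1)}{c}\}^{1/\alpha}$ required there (since $\frac{16\ln 2}{\Delta_k^2} > 1$), so every arm has been pulled at least $\frac{c}{2}\frac{L^{\alpha+1}}{\alpha+1}$ times. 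Each Hoeffding term thus becomes $\exp(-\frac{c\Delta_k^2}{16(\alpha+1)}L^{\alpha+1})$.

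The main obstacle---and the step that dictates the precise form of $\gamma_k$---is converting this bound, which decays exponentially in $L^{\alpha+1}$, into the stated polynomial-in-$t$ term. Using that $\log$ is base two, $L = \log_2(\frac{t}{2(1+2c)K})$ gives $t^2 = 4(1+2c)^2 K^2\, 4^{L}$, so the target $\frac{64(1+2c)^2K^2}{(\Delta_k t)^2}$ (half of the stated second term, one half for each arm) equals $\frac{16}{\Delta_k^2}2^{-2L}$. Since $\frac{16}{\Delta_k^2} \geq 1$ only helps, it suffices to verify $\frac{c\Delta_k^2}{16(\alpha+1)}L^{\alpha+1} \geq 2L\ln 2$, i.e.\ $L^\alpha \geq \frac{16\ln 2}{\Delta_k^2}\frac{2(\alpha+1)}{c} = \gamma_k^\alpha$, which is exactly the hypothesis $L \geq \gamma_k$. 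Summing the two arm contributions recovers the full $\frac{128(1+2c)^2K^2}{(\Delta_k t)^2}$, completing the bound. I expect the constant-chasing in this final conversion to be the delicate part; everything upstream is a routine good-event/bad-event decomposition.
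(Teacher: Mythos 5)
Your overall strategy is the same as the paper's: drop all but the optimal arm from the softmax denominator, split on the event that the empirical means are separated by $\Delta_k/2$ (your $G$ is exactly the complement of the paper's $B^k_t\cup C^k_t$, and on $G$ the paper's gap event $A^k_t$ holds), and control $\mathbb{P}(G^c)$ via Hoeffding together with the pull-count lower bound from Lemma~\ref{lemma2}. Your final constant-chasing, showing that the hypothesis $L\geq\gamma_k$ is precisely what converts $e^{-\zeta(t)\Delta_k^2/8}$ into $2^{-2L}=\bigl(\frac{2(1+2c)K}{t}\bigr)^2$, is correct and matches the paper.

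There is, however, one genuine gap: the step $\mathbb{P}(G^c)\leq \exp(-\tau_k(t)\Delta_k^2/8)+\exp(-\tau_1(t)\Delta_k^2/8)$ applies Hoeffding's inequality as if $\tau_k(t)$ were a deterministic sample size. In this algorithm the number of pulls of arm $k$ by time $t$ is random and depends adaptively on the observed rewards (through the Boltzmann phases), so Hoeffding cannot be invoked directly with the random count in the exponent. The paper handles this by a union bound over all admissible values $s\in\{\lceil\zeta(t)\rceil,\dots,t-1\}$ of the pull count, summing the geometric series $\sum_{s\geq\lceil\zeta(t)\rceil}e^{-s\Delta_k^2/8}\leq \frac{8}{\Delta_k^2}e^{\Delta_k^2/8}e^{-\zeta(t)\Delta_k^2/8}\leq\frac{16}{\Delta_k^2}e^{-\zeta(t)\Delta_k^2/8}$. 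Note that the resulting factor $\frac{16}{\Delta_k^2}$ is exactly the quantity you dismiss as spare slack when you write that ``$\frac{16}{\Delta_k^2}\geq 1$ only helps'': that factor is not slack, it is the price of the union bound over the random sample size, and once you pay it your computation lands on the stated constant $\frac{64(1+2c)^2K^2}{(\Delta_k t)^2}$ per arm with no room left over. The fix is standard and your argument goes through verbatim after inserting it, but as written the Hoeffding step is not valid. (A minor shared blemish: both your argument and the paper's actually produce $e^{-\frac{\eta\Delta_k}{2}\sqrt{t-1}}$, which is larger than the stated $e^{-\frac{\eta\Delta_k}{2}\sqrt{t}}$; this does not affect the regret analysis.)
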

\begin{proof}
Let $\hat{\mu}_k(t - 1) = r_k(t - 1)$, denote the average reward for arm $k$, at the end of time $t - 1$ or the beginning of time $t$. Consider the following events:
\begin{itemize}
    \item $A^k_t : \hat{\mu}_k(t - 1) - \hat{\mu}_1(t - 1) < -\frac{\Delta_k}{2}$.
    \item $B^k_t : \hat{\mu}_k(t - 1) \geq \mu_k + \frac{\Delta_k}{4}$.
    \item $C^k_t : \hat{\mu}_1(t - 1) \leq \mu_1 - \frac{\Delta_k}{4}$.
\end{itemize}
Note that, for any $k$, if both the events $B^k_t$ and $C^k_t$ occur, then the event $A^k_t$ will not occur. Further, it can be easily verified that $A^k_t\cup B^k_t\cup C_k^t$ is the entire sample space. Therefore,
\begin{align}\label{eq:lem2}
\begin{split}
    \mathbbm{1}&\{k_t = k\} \\&\leq \mathbbm{1}\{k_t = k, A^k_t\} + \mathbbm{1}\{k_t = k, B^k_t\} + \mathbbm{1}\{k_t = k, C^k_t\}\\
    &\leq \mathbbm{1}\{k_t = k, A^k_t\} + \mathbbm{1}\{B^k_t\} + \mathbbm{1}\{C^k_t\}
\end{split}
\end{align}
Let us consider the first term on the RHS of the above inequality:
\begin{align}\label{eq:lem2a}
    \begin{split}
        \mathbb{E}[\mathbbm{1}\{k_t = k, A^k_t\}] &= \mathbb{P}(k_t = k, A^k_t)\\
        &\leq \mathbb{P}(k_t = k | A^k_t)\\
        &= \frac{e^{\eta\sqrt{t - 1}\hat{\mu}_k(t - 1)}}{\sum_{l = 1}^K e^{\eta\sqrt{t - 1}\hat{\mu}_l(t - 1)}} (\text{under } A^k_t)\\
        &\leq e^{\eta\sqrt{t - 1}(\hat{\mu}_k(t - 1) - \hat{\mu}_1(t - 1))} (\text{under } A^k_t)\\
        &\leq e^{-\frac{\eta\Delta_k\sqrt{t - 1}}{2}}.
    \end{split}
\end{align}
Now, observe that $0 < \Delta_k^2 \leq 1, \forall k \neq 1$. Therefore, $\log(\frac{t}{2(1 + 2c)K}) \geq \{\frac{16\ln 2}{ \Delta_k^2}\frac{2(\alpha + 1)}{c}\}^{\frac{1}{\alpha}}$ implies that $\log(\frac{t}{2(1 + 2c)K}) \geq \{\frac{16\ln 2}{ \Delta_k^2}\frac{2(\alpha + 1)}{c}\}^{\frac{1}{\alpha}} \geq \{\frac{2(\alpha + 1)}{c}\}^{\frac{1}{\alpha}}$. Hence, under the conditions on $t$ in this lemma, Lemma \ref{lemma2} also holds. From Lemma \ref{lemma2}, we know that for any sub-optimal arm $k$, $T_k(t) = \tau_k(t - 1) \geq \frac{c}{2}\frac{\{\log(\frac{t}{2(1 + 2c)K})\}^{\alpha + 1}}{\alpha + 1}$. Let $\zeta(t) = \frac{c}{2}\frac{\{\log(\frac{t}{2(1 + 2c)K})\}^{\alpha + 1}}{\alpha + 1}$. Now, applying Hoeffding's inequality to the second term on the RHS of \eqref{eq:lem2}, we get:
\begin{align}\label{eq:lem2b}
    \begin{split}
        \mathbb{E}&[\mathbbm{1}\{B_t^k\}]\\ &= \mathbb{P}(B_t^k)\\
        & = \mathbb{P}(\hat{\mu}_k(t - 1) \geq \mu_k + \frac{\Delta_k}{4})\\
        & = \mathbb{P}(\exists s \in \{\lceil\zeta(t)\rceil, ..., t - 1\} \text{ with } \tau_k(t - 1) = s, B_t^k)\\
        &\leq \sum_{s = \lceil\zeta(t)\rceil}^{t - 1}\mathbb{P}(\hat{\mu}_k(t - 1) \geq \mu_k + \frac{\Delta_k}{4}, \tau_k(t - 1) = s)\\
        &\leq \sum_{s = \lceil\zeta(t)\rceil}^{t - 1}e^{-\frac{s\Delta_k^2}{8}}\\
        &\leq \frac{8}{\Delta_k^2}e^{-\frac{(\zeta(t) - 1)\Delta_k^2}{8}}\\
        &\leq \frac{8}{\Delta_k^2}e^{\frac{\Delta_k^2}{8}}e^{-\frac{\zeta(t)\Delta_k^2}{8}}\\
        &\leq \frac{8}{\Delta_k^2}e^{\frac{\Delta_k^2}{8}}e^{-2\ln (\frac{t}{2(1 + 2c)K})}\\
        &\leq \frac{64(1 + 2c)^2K^2}{(\Delta_kt)^2}
    \end{split}
\end{align}
where the penultimate inequality follows from the condition on $t$ in the lemma and the final inequality follows from the fact that $e^{\frac{x}{8}} < 2, \text{ for } 0 < x \leq 1$. A similar analysis can be done for the third term on the RHS of \eqref{eq:lem2}. The lemma then follows by combining \eqref{eq:lem2}, \eqref{eq:lem2a} and \eqref{eq:lem2b}.
\end{proof}
Now, we use Lemma \ref{lemma4} to upper bound the number of times a sub-optimal arm is pulled in the Boltzmann exploration phase of ABE (or the second inner loop of ABE), until the end of time horizon $T$.
\begin{corollary}\label{corr1}
Let $\tau''_k(T)$ denote the number of times a sub-optimal arm $k$ is pulled during the Boltzmann exploration phase of ABE (or the second inner loop of ABE), until the end of time horizon $T$. Then:
\begin{align*}
    \mathbb{E}[\tau''_k(T)] \leq O(\frac{c^2K^2}{\Delta_k^2}) + 2(1 + 2c) K 2^{\gamma_k},
\end{align*}
where $\gamma_k = \{\frac{16\ln 2}{ \Delta_k^2}\frac{2(\alpha + 1)}{c}\}^{\frac{1}{\alpha}}$, as defined in Lemma \ref{lemma4}.
\begin{proof}
By definition: 
\begin{align*}
    \mathbb{E}[\tau''_k(T)] = \sum_{t = 1}^T \mathbbm{1}\{t \in \text{2nd inner loop}\}\mathbb{E}[\mathbbm{1}\{k_t = k\}].
\end{align*}
Therefore, we have:
\begin{align*}
    \mathbb{E}[\tau_k''(T)] &\leq 2(1 + 2c)K2^{\gamma_k} + \sum_{t = 2(1 + 2c)K2^{\gamma_k} + 1}^Tp_k(t)\\
    &\leq 2(1 + 2c)K2^{\gamma_k} + O(\frac{c^2K^2}{\Delta_k^2}),
\end{align*}
where the last step follows from Lemma \ref{lemma4}.
\end{proof}
\end{corollary}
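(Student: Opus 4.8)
The plan is to begin from the definition
\[
\mathbb{E}[\tau''_k(T)] = \sum_{t=1}^{T}\mathbbm{1}\{t\in\text{2nd inner loop}\}\,\mathbb{E}[\mathbbm{1}\{k_t=k\}] = \sum_{t=1}^{T}\mathbbm{1}\{t\in\text{2nd inner loop}\}\,p_k(t),
\]
and to split the horizon at the threshold $t_0 := 2(1+2c)K\,2^{\gamma_k}$, which is exactly the point at which the hypothesis $\log(\frac{t}{2(1+2c)K})\ge\gamma_k$ of Lemma \ref{lemma4} first becomes valid (here the logarithm is base $2$, consistently with the $2^{i}$ block lengths, so the log-condition rearranges to $t\ge 2(1+2c)K\,2^{\gamma_k}$). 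For the early slots $t\le t_0$ I would make no refined estimate and simply use $\mathbbm{1}\{k_t=k\}\le 1$, so their total contribution is at most $t_0=2(1+2c)K\,2^{\gamma_k}$, which already supplies the second term of the claim.

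For $t>t_0$, Lemma \ref{lemma4} applies and yields the per-step bound $p_k(t)\le e^{-\frac{\eta\Delta_k}{2}\sqrt{t}}+\frac{128(1+2c)^2K^2}{(\Delta_k t)^2}$, so it remains to control the tail sum $\sum_{t>t_0}^{T}\bigl(e^{-\frac{\eta\Delta_k}{2}\sqrt{t}}+\frac{128(1+2c)^2K^2}{(\Delta_k t)^2}\bigr)$ by a quantity \emph{independent of $T$}; that independence is the whole point, since it is what prevents the Boltzmann-phase regret from scaling with the horizon. The polynomial summand is routine: $\sum_{t\ge 1}t^{-2}=\pi^2/6$, so the $1/t^2$ piece contributes at most $\frac{128(1+2c)^2K^2}{\Delta_k^2}\cdot\frac{\pi^2}{6}=O(\frac{c^2K^2}{\Delta_k^2})$, where I absorb $(1+2c)^2$ into the $c^2$ factor.

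The step I expect to require the most care is the exponential sum $\sum_{t\ge 1}e^{-\frac{\eta\Delta_k}{2}\sqrt{t}}$, since it is not geometric in $t$. I would bound it by the integral $\int_{0}^{\infty}e^{-\frac{\eta\Delta_k}{2}\sqrt{s}}\,ds$ (justified by monotonicity of the integrand), and then the substitution $u=\sqrt{s}$ turns this into $\int_{0}^{\infty}2u\,e^{-\frac{\eta\Delta_k}{2}u}\,du=\frac{8}{\eta^2\Delta_k^2}$, so this term is also $O(\frac{1}{\Delta_k^2})$ and is dominated by the $1/t^2$ contribution. Adding the early-slot bound $2(1+2c)K\,2^{\gamma_k}$ to these two $O(\frac{c^2K^2}{\Delta_k^2})$ tails gives the corollary. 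The only bookkeeping to watch is that every $(1+2c)$ and $K$ factor stays absorbed inside the stated $O(\cdot)$ and that no stray $T$ dependence survives in either tail.
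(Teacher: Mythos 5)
Your proposal is correct and follows essentially the same route as the paper: start from the definition, split the sum at the threshold $t_0 = 2(1+2c)K2^{\gamma_k}$ where the hypothesis of Lemma \ref{lemma4} kicks in, bound the early slots trivially by $t_0$, and bound the tail via Lemma \ref{lemma4}. The paper compresses the tail estimate into ``follows from Lemma \ref{lemma4},'' whereas you spell out the two convergent sums (the $\sum t^{-2}$ piece and the integral bound $\frac{8}{\eta^2\Delta_k^2}$ for the stretched-exponential piece); this is a welcome elaboration of the same argument rather than a different one.
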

\begin{theorem}\label{theorem1}
The regret achieved by the ABE algorithm (Algorithm \ref{alg:abe}) for the stochastic multi-armed bandit problem is:
\begin{align*}
    \mathbb{E}[R(T)] &\leq \sum_{k\neq 1}\bigg\{c\frac{\Delta_k\{\log(\frac{T}{K} + 2) + 1\}^{\alpha + 1}}{\alpha + 1}\\&\hspace{5mm} + 2(1 + 2c) K 2^{\gamma_k}\Delta_k + O(\frac{c^2K^2}{\Delta_k})\bigg\}
\end{align*}
where $\gamma_k = \{\frac{16\ln 2}{ \Delta_k^2}\frac{2(\alpha + 1)}{c}\}^{\frac{1}{\alpha}}$, as defined in Lemma \ref{lemma4}.
\end{theorem}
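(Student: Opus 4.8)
The plan is to decompose the total regret by separating, for each sub-optimal arm $k$, the pulls that happen during the pure exploration phase from those that happen during the Boltzmann exploration phase. Since the regret is $\mathbb{E}[R(T)] = \sum_{k\neq 1}\Delta_k\mathbb{E}[\tau_k(T)]$ and every pull of arm $k$ falls into exactly one of the two inner loops, I would write $\mathbb{E}[\tau_k(T)] = \tau'_k(T) + \mathbb{E}[\tau''_k(T)]$, where $\tau'_k(T)$ is the (deterministic) number of pure-exploration pulls and $\tau''_k(T)$ is the number of Boltzmann-phase pulls. The entire theorem is then an immediate consequence of assembling the bounds already established for these two quantities.

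First I would invoke Lemma \ref{lemma3} to bound the pure-exploration contribution, giving $\Delta_k \tau'_k(T) \leq c\,\Delta_k\{\log(\tfrac{T}{K}+2)+1\}^{\alpha+1}/(\alpha+1)$, which is exactly the first term in the theorem. Next I would invoke Corollary \ref{corr1} to bound the Boltzmann-phase contribution, giving $\Delta_k\mathbb{E}[\tau''_k(T)] \leq 2(1+2c)K2^{\gamma_k}\Delta_k + O(c^2K^2/\Delta_k)$, which supplies the remaining two terms. Summing these over all $k\neq 1$ yields precisely the claimed bound, so the proof is essentially a one-line aggregation once the preceding results are in hand.

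The only point requiring a little care is the bookkeeping of the additive constant-time pulls near the start, before $t$ is large enough for Lemmas \ref{lemma2} and \ref{lemma4} to apply. Corollary \ref{corr1} has already absorbed these into the $2(1+2c)K2^{\gamma_k}$ term by bounding every pull in the early regime trivially by $1$ and only applying Lemma \ref{lemma4} for $t$ exceeding the threshold $2(1+2c)K2^{\gamma_k}$; I would simply cite that the threshold in Corollary \ref{corr1} is consistent with the condition $\log(\tfrac{t}{2(1+2c)K})\geq\gamma_k$ appearing in Lemma \ref{lemma4}, since this condition is equivalent to $t\geq 2(1+2c)K2^{\gamma_k}$. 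Because the heavy lifting (the concentration argument and the geometric summation of $p_k(t)$) is entirely contained in Lemma \ref{lemma4} and Corollary \ref{corr1}, the main obstacle is not analytic but purely a matter of verifying that the two phase-wise bounds combine without double-counting and that the thresholds match up.

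Thus I expect the proof to read as: start from the regret decomposition, split $\tau_k(T)$ across the two loops, substitute Lemma \ref{lemma3} and Corollary \ref{corr1}, and sum over $k\neq 1$. No new estimate is needed, and the dominant $O(K\log^{1+\alpha}T)$ dependence is visibly inherited from the first term, with the $2^{\gamma_k}$ and $O(c^2K^2/\Delta_k)$ terms contributing only $K$-dependent constants that do not grow with $T$.
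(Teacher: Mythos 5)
Your proposal matches the paper's proof exactly: the paper's argument is precisely the decomposition $\mathbb{E}[\tau_k(T)] = \tau'_k(T) + \mathbb{E}[\tau''_k(T)]$ followed by substituting Lemma \ref{lemma3} and Corollary \ref{corr1} and summing over $k \neq 1$. Your additional remarks on the threshold bookkeeping are correct and consistent with how Corollary \ref{corr1} absorbs the early-time pulls.
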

\begin{proof}
Combining Lemma \ref{lemma3} and Corollary \ref{corr1}, we get the result.
\end{proof}
\begin{corollary}\label{corr2}
The regret achieved by the ABE algorithm for the stochastic multi-armed bandit problem is $O(K\log^{1 + \alpha} T)$. For $\alpha = 1$, the regret achieved by the ABE algorithm for the stochastic multi-armed bandit problem is $O(K\log^2 T)$. 
\end{corollary}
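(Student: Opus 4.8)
The plan is to read off the asymptotic order in $T$ directly from the bound in Theorem \ref{theorem1}, treating $c$, $\alpha$, and each $\Delta_k$ as constants that the big-O notation is permitted to absorb, and tracking only the dependence on $K$ and $T$. The starting point is the three-term bound on $\mathbb{E}[R(T)]$, and the key observation is that only the first term carries any dependence on $T$, while the remaining two are constant in $T$.

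First I would handle the first term, $\sum_{k\neq 1} c\frac{\Delta_k\{\log(\frac{T}{K}+2)+1\}^{\alpha+1}}{\alpha+1}$. Since $T/K + 2 \leq T + 2$ for $K \geq 1$, we have $\log(\frac{T}{K}+2)+1 = O(\log T)$, so each summand is $O(\log^{1+\alpha} T)$ once the bounded factor $\Delta_k \leq 1$ and the constant $c/(\alpha+1)$ are absorbed. There are $K-1 = O(K)$ summands, so this term contributes $O(K\log^{1+\alpha} T)$.

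Next I would dispose of the second term, $\sum_{k\neq 1} 2(1+2c)K 2^{\gamma_k}\Delta_k$, and the third term, $\sum_{k\neq 1} O(\frac{c^2 K^2}{\Delta_k})$. Both are independent of $T$: the exponent $\gamma_k$ depends only on $\Delta_k$, $c$, and $\alpha$, and the third term involves no $T$ at all. Hence each is $O(1)$ as a function of $T$ (with the $K$-, $c$-, and $\Delta_k$-dependence folded into the constant), and as $T \to \infty$ both are dominated by the first term.

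Combining the three contributions gives $\mathbb{E}[R(T)] = O(K\log^{1+\alpha} T)$, and substituting $\alpha = 1$ yields $O(K\log^2 T)$. I expect the only point requiring any care --- rather than a genuine obstacle --- to be the bookkeeping that separates the $K$-dependence, which the corollary keeps explicit, from the problem- and algorithm-dependent constants ($\Delta_k$, $c$, $\alpha$, and the $T$-free terms) that are swept into the big-O; once that accounting is fixed, the result is immediate from Theorem \ref{theorem1}.
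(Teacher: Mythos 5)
Your proposal is correct and matches the paper's proof, which is simply the observation that the bound follows from Theorem \ref{theorem1}; you have merely spelled out the routine details (only the first term grows with $T$, and it is $O(K\log^{1+\alpha}T)$). The bookkeeping caveat you raise is real but harmless: the $T$-independent terms carry $K^2$ and $K^3$ factors, which is consistent with the corollary only under the standard reading that the asymptotics are in $T$ with the problem instance fixed.
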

\begin{proof}
Follows from Theorem \ref{theorem1}.
\end{proof}
\subsection{Analysis of Algorithm 2}

The analysis of Algorithm 2 is similar to that of Algorithm \ref{alg:abe}. We only present the following theorem and its corollary for completeness, some additional details are provided in the supplementary material.
\begin{theorem}\label{theorem2}
The regret achieved by the ABE algorithm (Algorithm 2) for bandit problems with graph-structured feedback is:
\begin{align*}
    \mathbb{E}[R(T)] &\leq \sum_{k\neq 1, k \in \xi}\bigg\{c\frac{\Delta_k\{\log(\frac{T}{K} + 2) + 1\}^{\alpha + 1}}{\alpha + 1}\bigg\}\\&\hspace{5mm} + \sum_{k\neq 1}\bigg\{2(1 + 2c) K 2^{\gamma_k}\Delta_k + O(\frac{c^2K^2}{\Delta_k})\bigg\}
\end{align*}
where $\gamma_k = \{\frac{16\ln 2}{ \Delta_k^2}\frac{2(\alpha + 1)}{c}\}^{\frac{1}{\alpha}}$, as defined in Lemma \ref{lemma4}.
\end{theorem}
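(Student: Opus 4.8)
The plan is to mirror the structure of Theorem~\ref{theorem1} while accounting for the only substantive difference in Algorithm~2, namely that the pure exploration phase pulls arms along a fixed sequence $\xi$ of length at most $\beta(G)$ rather than pulling all $K$ arms. The regret decomposes, as in the bandit-feedback case, into the contribution from the pure exploration phase plus the contribution from the Boltzmann exploration phase, summed over all suboptimal arms.

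First I would bound the pure-exploration contribution. The key observation is that in Algorithm~2 an arm $k$ is \emph{pulled} during the pure exploration phase only if $k \in \xi$; arms not in $\xi$ still have their rewards \emph{observed} (through graph feedback) but incur no regret from being pulled. Hence the pure-exploration regret is $\sum_{k \neq 1,\, k \in \xi} \Delta_k \tau'_k(T)$, and applying the upper bound from Lemma~\ref{lemma3} to each such $k$ gives exactly the first sum in the statement, $\sum_{k\neq 1,\, k \in \xi} c\,\Delta_k \{\log(\tfrac{T}{K}+2)+1\}^{\alpha+1}/(\alpha+1)$. This is where the improvement from $K$ to $\beta(G)$ enters: since $|\xi| \le \beta(G)$, the number of terms in this sum is at most $\beta(G)$.

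Next I would handle the Boltzmann-exploration contribution, which is where I would need to re-examine the reward-estimate concentration. The crucial point is that Lemma~\ref{lemma2}'s lower bound on the number of \emph{observations} of each arm $k$ still holds: in each pure exploration phase every arm's reward is observed at least once (by the construction of $\xi$), so the estimate $\hat\mu_k(t-1)$ is based on at least $\zeta(t) = \tfrac{c}{2}\{\log(\tfrac{t}{2(1+2c)K})\}^{\alpha+1}/(\alpha+1)$ samples, exactly as in the bandit case. Consequently Lemma~\ref{lemma4} carries over verbatim (the Hoeffding argument in \eqref{eq:lem2b} depends only on the sample count, not on whether the sample came from a pull or an observation), and Corollary~\ref{corr1} then yields $\mathbb{E}[\tau''_k(T)] \le 2(1+2c)K2^{\gamma_k} + O(c^2K^2/\Delta_k^2)$ for every suboptimal $k$. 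Multiplying by $\Delta_k$ and summing over all $k \neq 1$ produces the second sum in the statement. Combining the two contributions gives the theorem.

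The main obstacle, and the step I would be most careful about, is verifying that Lemma~\ref{lemma2}'s observation-count lower bound genuinely transfers to the graph-feedback setting. Specifically, I must confirm that the sequence $\xi$ produced by the greedy dominating-set-style procedure guarantees that \emph{every} arm (not merely those in $\xi$) is observed in each pure exploration phase, since the concentration bounds for suboptimal arms in the Boltzmann phase rely on all arms being observed $\Omega(\log^{1+\alpha} t)$ times. This follows from the construction, because each selected arm together with its neighbors covers a disjoint block of vertices and the process continues until no arm remains uncovered, so every vertex is either selected or adjacent to a selected arm and hence observed; once this is established, the remainder of the argument is a direct transcription of the bandit-feedback proof.
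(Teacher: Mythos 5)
Your proposal is correct and follows essentially the same route as the paper: the paper's supplementary argument likewise splits the regret into the pure-exploration contribution (nonzero only for arms in $\xi$, bounded via the analogue of Lemma~\ref{lemma3}) and the Boltzmann-phase contribution (via the observation-count analogue of Lemma~\ref{lemma2}, after which Lemma~\ref{lemma4} and Corollary~\ref{corr1} carry over unchanged). Your explicit verification that every arm is observed in each pure exploration phase by the construction of $\xi$ is a detail the paper leaves implicit, but it is the same argument.
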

\begin{corollary}
The regret achieved by the ABE algorithm (Algorithm 2) for bandit problems with graph-structured feedback is $O(\beta(G)\log^{1 + \alpha}T)$, where $\alpha >0.$ 
\end{corollary}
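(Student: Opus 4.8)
The plan is to read off the claimed $O(\beta(G)\log^{1+\alpha} T)$ bound directly from the regret expression in Theorem \ref{theorem2}, by separating the two sums according to their dependence on the horizon $T$. The entire $T$-dependence lives in the first sum, so the argument reduces to counting how many terms that sum contains and bounding the growth of each term; the second sum is handled by observing it is horizon-independent.

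First I would invoke the construction of the pure-exploration sequence $\xi$. As described in the algorithm, $\xi$ is built by the greedy procedure that repeatedly pulls an arm, observes the rewards of its neighbors in $G$, and then discards the pulled arm together with all of its neighbors from the candidate set. The arms that actually get pulled thus form an independent set of $G$, so their number satisfies $|\xi| \leq \beta(G)$. Consequently, the first sum in Theorem \ref{theorem2}, which is indexed over $k \neq 1$ with $k \in \xi$, contains at most $\beta(G)$ terms.

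Next I would bound each term of the first sum. Every summand equals $c\Delta_k\{\log(\tfrac{T}{K}+2)+1\}^{\alpha+1}/(\alpha+1)$; since $\Delta_k \leq 1$ and $c,\alpha$ are fixed algorithm parameters, each such term is $O(\log^{1+\alpha} T)$ with an implicit constant depending only on $c$ and $\alpha$. Multiplying by the at-most-$\beta(G)$ count gives a contribution of $O(\beta(G)\log^{1+\alpha} T)$ from the first sum. For the second sum, its summands $2(1+2c)K2^{\gamma_k}\Delta_k + O(c^2K^2/\Delta_k)$ do not involve $T$ at all; with $\gamma_k$, $\Delta_k$, $c$, and $K$ all being problem- and algorithm-dependent constants, the entire second sum is $O(1)$ in $T$ and is therefore absorbed into the big-$O$ constant.

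Adding the two contributions then yields $\mathbb{E}[R(T)] = O(\beta(G)\log^{1+\alpha} T)$, as claimed. The only genuinely substantive point is the combinatorial bound $|\xi| \leq \beta(G)$, which ties the number of pure-exploration pulls to the independence number; everything else is bookkeeping on the $T$-dependence. I therefore anticipate no real obstacle beyond being careful to confirm that the constants hidden in the big-$O$ (in particular the potentially large $2^{\gamma_k}$ factor) are indeed independent of $T$, so that their suppression does not distort the stated time-dependence.
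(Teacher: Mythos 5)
Your argument is correct and is essentially the paper's own proof spelled out in detail: the paper likewise derives the corollary directly from Theorem~\ref{theorem2} together with the observation that $|\xi| \leq \beta(G)$, treating the second ($T$-independent) sum as an additive constant. No gaps.
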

\begin{proof}
Follows from Theorem \ref{theorem2} and the fact that $|\xi| \leq \beta(G)$.
\end{proof}

\section{MAB Experiments}
In this section we describe a real-world multi-armed bandit application (see \cite{Fekete10} and \cite{kegl2009boosting} for more details).
We then briefly describe the benchmark algorithms we compare against and the standard datasets we implement the algorithms on. 

\textbf{Bandit based Boosting:} Boosting is an ensemble learning method which produces a strong learner as a combination of weak base learners. The learning is based on the training data: $\{(x^{(1)}, y^{(1)}), \dots , (x^{(n)}, y^{(n)})\}$ where $x^{(i)}$ is the $i^{\text{th}}$ input vector and $y^{(i)} \in \{-1, +1\}^L$ is a label vector with $y_l = +1$ iff $l$ is the true label.
AdaBoost and its multiclass generalization AdaBoost.MH are among the most commonly used boosting algorithms introduced by \cite{Freund97} and \cite{Schapire99} respectively.

\cite{Fekete10} proposed a bandit based AdaBoost.MH algorithm, namely AdaBoost.MH.EXP3.P, to speed up the per-iteration complexity of AdaBoost.MH. This bandit based AdaBoost.MH works by dividing the set of weak classifiers into possibly overlapping subsets and selecting a subset at each iteration using a bandit algorithm (more details in the supplementary material). In the next section, we will compare our algorithm to the best algorithm (EXP3.P) in \cite{Fekete10}.

\textbf{Dataset Description:} We evaluated our algorithm on the bandit based boosting application, applying it to benchmark datasets used in \cite{Fekete10}. These datasets include MNIST, USPS, UCI letter, UCI pendigit, and UCI ISOLET which are all frequently used for evaluating classification algorithms. Statistics for these datasets are summarized in Table \ref{table:datasets}.

\begin{table}[ht]
\centering
\begin{tabular}{c c c c c}
\hline \hline
Dataset & \# Features & \# Classes & \# Train & \# Test \\ [0.5ex]
\hline
MNIST & 784 & 10 & 60,000 & 10,000 \\
USPS & 256 & 10 & 7,291 & 2,007 \\
letter & 16 & 26 & 16,000 & 4,000 \\ 
pendigit & 16 & 10 & 7,494 & 3,498 \\
ISOLET & 617 & 26 & 6,238 & 1,559 \\
\end{tabular}
\caption{Dataset Statistics}
\label{table:datasets}
\end{table}


\textbf{Benchmark Algorithms:} We compare our algorithm with two different bandit algorithms: EXP3.P and BGE. EXP3.P \cite{Auer95} is a popular adversarial bandit algorithm applied to the bandit based boosting application in \cite{Fekete10}. 
Boltzmann-Gumbel Exploration or BGE (see \cite{Gumbel}) is the second algorithm we compare our algorithm with. BGE samples an arm according to $\arg\max_i \{\hat{\mu}_i + C \sqrt{Z_{i, t} / N_{i, t}}\}$, where $Z_{i,t}$ is a standard Gumbel random variable and $N_{i,t}$ is the number of times arm $i$ has been sampled until time $t$. The authors suggest choosing $C$ to be an estimate of the subgaussian parameter of the rewards, so we choose $C = \frac{1}{4}$ as the rewards in this experiment lie between $0$ and $1$. 

\begin{figure*}[h]
\centering
\vskip -0.0in
    \begin{subfigure}[]{0.3\textwidth}
        \centering
        \includegraphics[scale = 0.35]{mnist_algorithms_comparison.png}
        \caption{MNIST}
        \label{fig:MNIST_comparison}
    \end{subfigure} 
    \begin{subfigure}[]{0.3\textwidth}      
        \centering
        \includegraphics[scale = 0.35]{usps_algorithms_comparison.png}
        \caption{USPS}
        \label{fig:usps_comparison}
    \end{subfigure} 
    \begin{subfigure}[]{0.3\textwidth}
        \centering
        \includegraphics[scale = 0.35]{isolet_algorithms_comparison.png}
        \caption{ISOLET}
        \label{fig:isolet_comparison}
    \end{subfigure} %
    \caption{Comparison of different bandit algorithms for boosting.}
    \label{fig:comparison_figure}
    \begin{subfigure}[]{0.3\textwidth}
        \centering
        \includegraphics[scale = 0.35]{mnist_varying_params.png}
        \caption{MNIST}
         \label{fig:mnist_varying_params}    
    \end{subfigure}
    \begin{subfigure}[]{0.3\textwidth}
        \centering
        \includegraphics[scale = 0.35]{usps_varying_params.png}
        \caption{USPS}
        \label{fig:usps_varying_params}
    \end{subfigure} %
    \begin{subfigure}[]{0.3\textwidth}    
      \centering
      \includegraphics[scale=0.35]{isolet_varying_params.png}
      \caption{ISOLET}
        \label{fig:isolet_varying_params}   
    \end{subfigure} 
    \caption{Comparison of ABE with varying hyperparameters.}
\vskip -0.0in
\end{figure*} 

\begin{figure*}[h] 
\centering
\vskip 0.0in
\begin{subfigure}[]{\columnwidth}
        \centering
        \includegraphics[scale = 0.35]{epinions_algorithms_comparison.png}
        \caption{Epinions}
        \label{fig:epinions_comparison}
    \end{subfigure} %
    \begin{subfigure}[]{\columnwidth}
        \centering
        \includegraphics[scale=0.35]{fb_algorithms_comparison.png}
        \caption{Facebook}
        \label{fig:facebook_comparison} 
    \end{subfigure} 
    \caption{Comparison of different bandit algorithms for graph-structured feedback problems.}
    \begin{subfigure}[]{\columnwidth}
        \centering
        \includegraphics[scale=0.35]{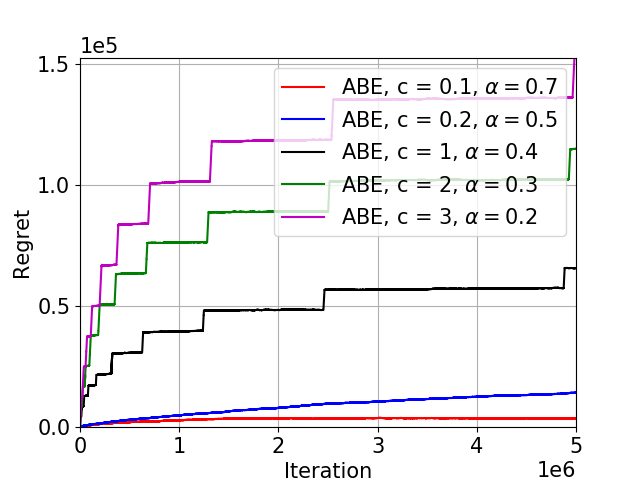}
        \caption{Epinions}
        \label{fig:epinions_varying_params}
    \end{subfigure} 
    \begin{subfigure}[]{\columnwidth}
        \centering
        \includegraphics[scale=0.35]{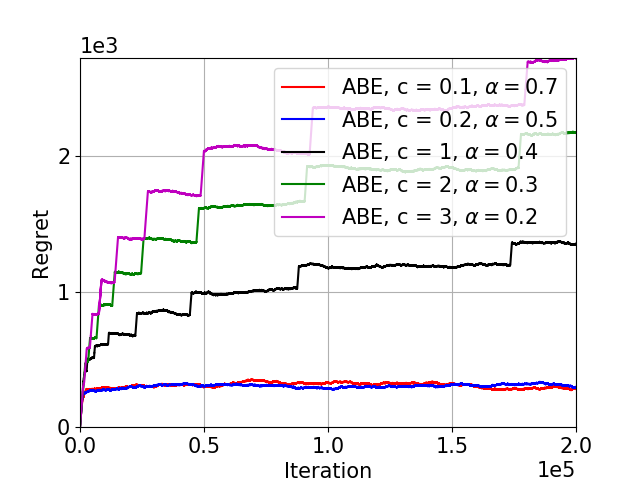}
        \caption{Facebook}
        \label{fig:facebook_varying_params}
    \end{subfigure} %
    \caption{Comparison of ABE with varying hyperparameters for graph-structured feedback problems.}
\vskip -0.0in
\end{figure*}


\textbf{Results:} We used the package described in \cite{MultiBoost} to run AdaBoost.MH.EXP3.P and added modules for our algorithm and BGE to compare the algorithms in a fair manner. For each dataset, all algorithms were run on the same computer and the testing error is plotted with respect to time (seconds). Because the initialization time for each algorithm may be different, we subtract this offset from the x-axis (time). Following \cite{Fekete10}, we averaged the test error at iteration $t$ over the last 20\% iterations. The results of the experiments comparing our algorithm with EXP3.P and BGE for MNIST, USPS and UCI ISOLET datasets are presented in Figures \ref{fig:MNIST_comparison}, \ref{fig:usps_comparison}, and \ref{fig:isolet_comparison} respectively. Due to space constraints, we do not include the figures for experiments on the UCI letter and UCI pendigit datasets. These figures can be found in the supplementary material. We note that our algorithm performs as well as or better than EXP3.P and BGE in all the experiments.

We used a training dataset to implement ABE with various hyperparameters $(c, \alpha)$ and used the largest $c$ which achieves the best performance in the test dataset (for which we report results). The best performing hyperparameters were observed to be $(c , \alpha) = (0.1, 0.7)$ for UCI pendigit, UCI letter and USPS, and $(0.2, 0.5)$ for UCI ISOLET and MNIST. The results of the experiments comparing the performance of our algorithm for different hyperparameter values are presented in Figures \ref{fig:mnist_varying_params},  \ref{fig:usps_varying_params}, and \ref{fig:isolet_varying_params}. Again, the corresponding figures for UCI letter and UCI pendigit datasets can be found in the supplementary material.

\section{Structured MAB Experiments}
In this section, we evaluate our algorithm (Algorithm 2) for bandit problems with graph-structured feedback on real-world graphs. 

\textbf{Recommendation in Social Networks:} We follow the experimental setup in \cite{Caron12} and \cite{Tossou17}. We consider a social network in which, at each iteration, a user is selected and an item is shown to the user who then rates the item. The item is also posted on the user's wall so that the friends of the user can view it, and it is assumed that they also rate this item. The goal is to show items to users who are likely to positively rate the shown items.

This problem can be formulated as a bandit problem with graph-structured feedback as follows. Users correspond to arms, and a friendship between two users defines an edge between the users. At each iteration the bandit algorithm recommends an item to some user $u$ and a feedback of $1$ (a positive rating) or $0$ (a negative rating) is received. Further, the friends of user $u$ also reveal their feedback to the bandit algorithm, serving as side information.

\textbf{Datasets:} To model graph-structured feedback, we follow the approach in \cite{Caron12} which appears to be the only major dataset evaluation of algorithms for MAB problems with such a feedback structure.
However, the Flixster dataset used in \cite{Caron12} is no longer available. So we used the Epinions dataset and preprocessed it in a similar manner. The details regarding the preprocessing steps are provided in the supplementary material.

The second dataset we use is the Facebook graph dataset from the Stanford Network Analysis Project (SNAP) \cite{Facebook}. Among the $10$ networks in the dataset, we used the network named 0.edges which has a list of pairs specifying an undirected edge between the pair. There were $333$ distinct nodes in the file and $5038$  edges.

We generate the rewards as follows. As the graph is created, a special node is sampled uniformly at random. This special node has a Bernoulli reward of mean $0.75$ while all other nodes produce Bernoulli rewards with mean $0.5$. This reward structure is motivated by the example in \cite{ELP} used to prove a lower bound on adversarial bandits on graphs. Edges determine the side information, i.e., if there is an edge between nodes $u$ and $v$, the reward for $v$ is also observed when sampling $u$ and vice versa.

\textbf{Benchmark Algorithms:} We compare our algorithm to the following algorithms: (i) ELP \cite{ELP}, which was designed for adversarial bandits with graph-structured feedback. ELP can be used with both undirected and directed graph-structured feedback. (ii) 
TS-U and TS-N, which are based on Thompson sampling, which were designed for directed and undirected graphs respectively \cite{TSN}. It is worth noting that ELP requires some knowledge of the graph structure, while our algorithm, TS-N and TS-U do not.

\textbf{Results:} For the Epinions recommendation experiment, ABE, TSU, and Boltzmann-Gumbel all demonstrate good performance in terms of regret as illustrated in figure \ref{fig:epinions_comparison}. ABE outperforms the other algorithms. Again, we observed that small hyperparameter values ($c = 0.1, \alpha = 0.7$) performed the best, as illustrated in Figure \ref{fig:epinions_varying_params}.


For the Facebook experiment, we plot the regret for the simulated data averaged over $10$ realizations and compare our algorithm with BGE, ELP, and TS-N as shown in Figure \ref{fig:facebook_comparison}. We first implemented ABE with different hyperparameter values, as shown in Figure \ref{fig:facebook_varying_params}. Then we used the hyperparameter values $c = 0.1, \alpha = 0.7$ (which performed the best) to compare ABE with other algorithms, as shown in Figure \ref{fig:facebook_comparison}. The parameters for all other algorithms were chosen to be the parameters theoretically justified in each paper. Because ELP has a step size that depends on the independence number of the graph, we tested the algorithm with different independence number estimates and picked the one that performed the best.

We see that our algorithm performs significantly better than all the other algorithms.

\section{Conclusion}
In this paper, we propose a simple modification to the popular Boltzmann exploration technique, which achieves near-logarithmic regret in the stochastic MAB setting. Additionally, the performance of our algorithm on real datasets suggests that adding a small amount of pure exploration to Boltzmann exploration leads to excellent performance in various bandit applications.
\section*{Acknowledgement}

We would like to thank St\'ephane Caron and Branislav Kveton for providing us with the preprocessing code for the Epinions dataset.

\bibliography{example_paper}
\bibliographystyle{icml2019}
\newpage
\section*{Supplementary Material}
\setcounter{section}{0}
\section{Pseudocode for Algorithm 2}
In this section, we present the pseudocode for Algorithm \ref{alg:abeg} as mentioned in the main body of the paper.
\setcounter{algorithm}{1}
\begin{algorithm}[H]
\caption{ABE for graph-structured bandits}\label{alg:abeg}
\begin{algorithmic}[1]\\
\textbf{parameters}: $\eta , \alpha , c > 0$.\\
\textbf{initialize}: $r_i(0) = 0$, $\tau_i(0) = 0 $, $\forall i \in [K]$, and $t = 0$.
\For{$i = 1, 2, ...,$}
\For{$j = 1, 2, 3, ..., |\xi|\times\lfloor ci^{\alpha}\rfloor$}
\State $t = t + 1$.
\State Play arm $k_t = \xi_j$.
\State Receive reward $x_{k_t}(t)$.
\State Observe rewards $x_{n_t}(t), \forall n_t \in N_{k_t}$.
\State $\tau_{m_t}(t) = \tau_{m_t}(t - 1) + 1, \forall m_t \in k_t\cup N_{k_t}$.
\State $r_{m_t}(t) = \frac{(\tau_{m_t}(t) - 1)r_{m_t}(t - 1) + x_{m_t}(t)}{\tau_{m_t}(t)}, \forall m_t \in k_t\cup N_{k_t}$.
\EndFor
\For{$j = 1, 2, 3, ..., K\times2^{i}$}
\State $t = t + 1$.
\State Compute $p_k(t) \propto \exp\big(\eta \sqrt{(t-1)}r_k\big)$, $\forall k $.
\State Play arm $k_t \sim p_k(t)$.
\State Receive reward $x_{k_t}(t)$.
\State Observe rewards $x_{n_t}(t), \forall n_t \in N_{k_t}$.
\State $\tau_{m_t}(t) = \tau_{m_t}(t - 1) + 1, \forall m_t \in k_t\cup N_{k_t}$.
\State $r_{m_t}(t) = \frac{(\tau_{m_t}(t) - 1)r_{m_t}(t - 1) + x_{m_t}(t)}{\tau_{m_t}(t)}, \forall m_t \in k_t\cup N_{k_t}$.
\EndFor
\EndFor\label{WISHForLoop}
\end{algorithmic}
\end{algorithm}
\section{Proofs}
\subsection{Proof for Lemma 1}
Until the end of the outer iteration $i$, the total number of time slots elapsed is $M_i = \sum_{k = 1}^i\{(2^k)K + \lfloor ck^{\alpha}\rfloor K\}$. Therefore:
\begin{align*}
    &\sum_{k = 1}^{i_t - 1}\{(2^k)K + \lfloor ck^{\alpha}\rfloor K\} \leq t \leq     \sum_{k = 1}^{i_t}\{(2^k)K + \lfloor ck^{\alpha}\rfloor K\}\\
    &\Rightarrow \sum_{k = 1}^{i_t - 1}2^k \leq \frac{t}{K} \leq \sum_{k = 1}^{i_t}\{2^k + ck^{\alpha}\}\\
    &\Rightarrow 2(2^{i_t - 1} - 1) \leq \frac{t}{K} \leq 2(2^{i_t} - 1) + \sum_{k = 1}^{i_t}ck^{\alpha}\\
    &\Rightarrow 2(2^{i_t - 1} - 1) \leq \frac{t}{K} \leq 2(2^{i_t} - 1) + \int_{0}^{i_t + 1}ck^\alpha dk\\
    &\Rightarrow 2(2^{i_t - 1} - 1) \leq \frac{t}{K} \leq 2(2^{i_t} - 1) + c\frac{(i_t + 1)^{\alpha + 1}}{\alpha + 1}\\
    &\Rightarrow 2^{i_t} - 2 \leq \frac{t}{K} \leq 2(2^{i_t} - 1) + c(2^{i_t + 1} + 1)\\        &\Rightarrow 2^{i_t} - 2 \leq \frac{t}{K} \leq 2(1 + 2c)2^{i_t}
\end{align*}
where the penultimate inequality follows from the fact that $\frac{(i + 1)^{\alpha + 1}}{\alpha + 1} \leq (i + 1)^2 \leq 2^{i + 1} + 1$, for $0 < \alpha \leq 1$. The lemma follows from the last inequality.
\subsection{Analysis of Algorithm 2}

The analysis for Algorithm \ref{alg:abeg} is very similar to the analysis of Algorithm 1. In fact, to analyze Algorithm \ref{alg:abeg} we will use multiple results obtained in Section 3.1 of the paper. It can be easily shown that Lemma 1 holds for Algorithm \ref{alg:abeg}. Next, we have the following lemma which is similar to Lemma 2 for Algorithm 1 from Section 3.1.
\setcounter{lemma}{4}
\begin{lemma}\label{lemma5}
The number of times any arm $k$ has been observed until the beginning of time $t$, for any $t$ such that $\log(\frac{t}{2(1 + 2c)K}) \geq \{\frac{2(\alpha + 1)}{c}\}^{\frac{1}{\alpha}}$ and $t$ falls in the Boltzmann exploration phase of ABE for graph-structured bandits (Algorithm \ref{alg:abeg}), is lower bounded by:
\begin{align*}
    \tau_k(t) \geq \frac{c}{2}\frac{\{\log(\frac{t}{2(1 + 2c)K})\}^{\alpha + 1}}{\alpha + 1}, \forall k \in [K].
\end{align*}
\end{lemma}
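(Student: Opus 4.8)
The plan is to mirror the proof of Lemma \ref{lemma2}, with the single essential change being that ``pulled'' is replaced by ``observed,'' and that the guarantee of $\lfloor ck'^\alpha\rfloor$ observations per arm in the pure exploration phase of outer iteration $k'$ now comes from the dominating-set property of the sequence $\xi$ rather than from directly pulling every arm. First, since we are told that Lemma \ref{lemma1} continues to hold for Algorithm \ref{alg:abeg}, I would invoke it to obtain $i_t \geq \log(\frac{t}{2(1 + 2c)K})$, exactly as in Lemma \ref{lemma2}.

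The key new step is to argue that within the pure exploration phase of any outer iteration $k'$, every arm $j \in [K]$ is observed at least $\lfloor ck'^\alpha\rfloor$ times, and not merely the arms appearing in $\xi$. By the construction of $\xi$ described in Section 2.2, $\xi$ is a dominating set of $G$: the procedure repeatedly selects an arm, removes it together with all its neighbors, and continues until no arm remains, so every arm is either an element of $\xi$ or a neighbor of some element of $\xi$. Consequently, one full pass through the sequence $\xi$ yields feedback on every arm --- an arm in $\xi$ through a direct pull, and an arm outside $\xi$ through the observation of a pulled neighbor. Since the pure exploration loop of iteration $k'$ runs for $|\xi|\lfloor ck'^\alpha\rfloor$ slots, i.e., exactly $\lfloor ck'^\alpha\rfloor$ complete passes through $\xi$, every arm is observed at least $\lfloor ck'^\alpha\rfloor$ times in that phase.

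With this in hand, the remainder is identical to Lemma \ref{lemma2}. Summing the per-phase observation counts over outer iterations $1$ through $i_t$ gives
\begin{align*}
    \tau_k(t) \geq \sum_{k' = 1}^{i_t}\lfloor ck'^\alpha\rfloor \geq c\frac{i_t^{\alpha + 1}}{\alpha + 1} - i_t,
\end{align*}
and then applying the estimate $f(x) = c\frac{x^{\alpha + 1}}{\alpha + 1} - x \geq \frac{c}{2}\frac{x^{\alpha + 1}}{\alpha + 1}$, valid for $x \geq \{\frac{2(\alpha + 1)}{c}\}^{\frac{1}{\alpha}}$, together with $i_t \geq \log(\frac{t}{2(1 + 2c)K}) \geq \{\frac{2(\alpha + 1)}{c}\}^{\frac{1}{\alpha}}$, yields the claimed bound.

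The only substantive point, and hence the main thing to get right, is the dominating-set claim: that every arm --- including arms never directly pulled --- accrues $\lfloor ck'^\alpha\rfloor$ observations per pure exploration phase through its pulled neighbor in $\xi$. Everything else is a verbatim repetition of the monotone-function estimate already established in Lemma \ref{lemma2}, so I would present that part only by reference.
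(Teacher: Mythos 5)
Your proof is correct and follows exactly the route the paper intends: the paper's own proof of this lemma is just the one-line remark ``Similar to Lemma \ref{lemma2},'' and your argument is that of Lemma \ref{lemma2} with the one genuinely new ingredient made explicit, namely that $\xi$ is a dominating set of $G$ by construction, so each full pass through $\xi$ yields an observation of every arm in $[K]$ and hence each arm accrues $\lfloor ck'^{\alpha}\rfloor$ observations per pure exploration phase. You have in fact supplied the detail the paper leaves implicit, so no gap remains.
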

\begin{proof}
Similar to Lemma 2.
\end{proof}
Next, we upper bound the number of times each arm will be pulled in the pure exploration phase of Algorithm \ref{alg:abeg}, until the end of time horizon $T$.
\begin{lemma}\label{lemma6}
Let $\tau'_k(T)$ denote the number of times arm $k$ is pulled during the pure exploration phase of ABE for graph-structured bandits (Algorithm \ref{alg:abeg}), until the end of time horizon $T$. Then:
\begin{align*}
    \tau'_k(T) &\leq c\frac{\{\log(\frac{T}{K} + 2) + 1\}^{\alpha + 1}}{\alpha + 1}, \forall k \in \xi,\\
    \tau'_k(T) &= 0, \forall k \notin \xi.
\end{align*}
\end{lemma}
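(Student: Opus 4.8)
The plan is to directly mirror the counting argument used in Lemma~\ref{lemma3} for Algorithm~\ref{alg:abe}, adapting it to the modified pure exploration phase of Algorithm~\ref{alg:abeg}. The key structural observation is that Algorithm~\ref{alg:abeg} differs from Algorithm~\ref{alg:abe} only in \emph{which} arms are pulled during the pure exploration phase: instead of cycling through all $K$ arms, it pulls only the arms in the fixed sequence $\xi$, while all other arms are observed (not pulled) via graph-structured feedback. Consequently, the number of \emph{pulls} of an arm $k \notin \xi$ during the pure exploration phase is exactly zero, which immediately gives the second claim $\tau'_k(T) = 0$ for $k \notin \xi$.

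For the first claim, I would first recall from Lemma~\ref{lemma1} (which the supplementary material notes continues to hold for Algorithm~\ref{alg:abeg}, since the outer loop time-slot counts $K \times 2^i$ and $K \lfloor ci^\alpha \rfloor$ are unchanged) that the outer iteration index $i_T$ for time horizon $T$ is upper bounded by $\log(\frac{T}{K} + 2)$. Then, for any arm $k \in \xi$, I would observe that in each outer iteration $i$ the pure exploration phase pulls arm $k$ at most $\lfloor ci^\alpha \rfloor$ times (exactly once per repetition of the sequence $\xi$, repeated $\lfloor ci^\alpha \rfloor$ times). Summing over all completed outer iterations up to $i_T$ and bounding the sum by an integral exactly as in the proof of Lemma~\ref{lemma3} yields
\begin{align*}
    \tau'_k(T) \leq \sum_{i = 1}^{i_T} \lfloor ci^\alpha \rfloor \leq \sum_{i=1}^{i_T} c i^\alpha \leq c\int_0^{i_T + 1} k^\alpha\, dk \leq c\frac{(i_T + 1)^{\alpha+1}}{\alpha + 1} \leq c\frac{\{\log(\frac{T}{K}+2)+1\}^{\alpha+1}}{\alpha+1},
\end{align*}
which is precisely the desired bound for $k \in \xi$.

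The proof is essentially identical in spirit to Lemma~\ref{lemma3}, so I do not anticipate a genuine obstacle; the only point requiring care is verifying that an arm $k \in \xi$ is pulled at most once per pass through the sequence in a single outer iteration, so that the per-iteration count is $\lfloor ci^\alpha \rfloor$ rather than some larger multiple. This holds because $\xi$ is a fixed sequence containing each of its arms once, and the inner loop runs $|\xi| \times \lfloor ci^\alpha \rfloor$ slots, i.e., exactly $\lfloor ci^\alpha \rfloor$ complete passes through $\xi$. Given that, the integral bound carries over verbatim, and the zero-pull case for $k \notin \xi$ is immediate from the algorithm's definition.
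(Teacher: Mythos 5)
Your proof is correct and is exactly the argument the paper intends: the paper's own proof of this lemma is simply ``Similar to Lemma 3,'' and you have filled in that counting argument faithfully, including the immediate observation that arms outside $\xi$ are never pulled (only observed) in the pure exploration phase. One tiny slip in your justification: the pure exploration phase of Algorithm 2 uses $|\xi|\lfloor ci^{\alpha}\rfloor$ slots rather than $K\lfloor ci^{\alpha}\rfloor$, so the time-slot counts are not literally ``unchanged''; but since $|\xi|\le K$, the elapsed time through iteration $i_T-1$ is still at least $K(2^{i_T}-2)$, so the upper bound $i_T\le\log(\frac{T}{K}+2)$ from Lemma 1 survives and your bound goes through.
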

\begin{proof}
Similar to Lemma 3.
\end{proof}
Next, we observe that Lemma 4 and Corollary 1 from Section 3.1 also hold for Algorithm \ref{alg:abeg}. Therefore, as in Section 3.1, combining Lemma \ref{lemma6} and Corollary 1 (for Algorithm \ref{alg:abeg}), we get Theorem 2.
\section{MAB Experiments}
\subsection{Additional Details for Bandit based Boosting}
Boosting is an ensemble learning method which produces a strong learner as a combination of weak base learners. The learning is based on the training data: $\{(x^{(1)}, y^{(1)}), \dots , (x^{(n)}, y^{(n)})\}$ where $x^{(i)}$ is the $i^{\text{th}}$ input vector and $y^{(i)} \in \{-1, +1\}^L$ is a label vector with $y_l = +1$ iff $l$ is the true label.
AdaBoost and its multiclass generalization AdaBoost.MH are among the most commonly used boosting algorithms introduced by \cite{Freund97} and \cite{Schapire99} respectively. These algorithms work by iterating through a training dataset and updating weights $w_{i,l}$ used to produce base learners at each iteration, ultimately returning a strong learner as a linear combination of base learners. 

We focus on discrete AdaBoost.MH with $L$ labels, where base learners are represented by
\begin{equation}\label{eq:base}
\bm{h}(x) = \alpha \bm{v} \varphi (x)
\end{equation}
with a vote vector $\bm{v} \in \{-1, +1\}^L$, base coefficient $\alpha$ (specified later), and scalar base learner $\varphi$ chosen to maximize the \textit{edge}
\begin{equation}\label{eq:edge}
\gamma = \sum_{i=1}^n \sum_{l=1}^L w_{i,l} v_l \varphi (x^{(i)}) y_l^{(i)}.
\end{equation}
The corresponding base coefficient is given by $\alpha = \frac{1}{2} \log \frac{1 + \gamma}{1 - \gamma}$.

For a multiclass classification task, scalar base learners in AdaBoost.MH can be any classifier including neural networks and decision trees. A commonly used scalar base learner is a single stump learner which is a decision tree of depth $1$ of the form
\begin{equation}
\varphi_{j,b} (x) = \mathbbm{1}\{x_j \geq b\}
\end{equation}
where $x_j$ denotes the $j^{\text{th}}$ dimension of input $x$.

One significance of AdaBoost.MH is that it can \textit{provably} obtain zero training error after a sufficient number of iterations with adequate weak learners. This is achieved by computing the base learner $\bm{h}$ as in \eqref{eq:base} at each iteration such that \eqref{eq:edge} is maximized, and recursively updating weights according to the following equation
\begin{equation}\label{eq:weights}
w_{i,l} \propto w_{i,l} \exp\Big(-h_l (x^{(i)}) y_l^{(i)}\Big).
\end{equation}
It is shown that after $T$ iterations, the strong classifier $\bm{f}^{(T)} = \sum_{t=1}^T \bm{h}^{(t)}$ minimizes the exponential margin loss
\begin{align*}
R(f^{(T)}) & = \frac{1}{2n}\sum_{i = 1}^n \sum_{l = 1}^L\Big( \exp\big(- f_{l}^{(T)} (x^{(i)}) \big)\mathbbm{1}\{y_l^{(i)} = +1\} \\
& \quad + \frac{1}{L - 1} \exp\big( - f_l^{(T)}(x^{(i)}) y_l^{(i)}\big)\mathbbm{1}\{y_l^{(i)} = -1\}\Big)
\end{align*}
which upper bounds the Hamming loss.

\cite{Fekete10} proposed a bandit based AdaBoost.MH algorithm, namely AdaBoost.MH.EXP3.P, to speed up the per-iteration complexity. 
This bandit based AdaBoost.MH works by dividing the set of weak classifiers ${\cal H}$ into $K$ (possibly overlapping) subsets $\{{\cal H}_1 , \dots , {\cal H}_K\}$ and selecting an index $i \in \{1, \dots , K\}$ using a bandit algorithm. 
The maximization in \eqref{eq:edge} is then computed over the subset ${\cal H}_i$, significantly reducing the per-iteration time complexity from $\abs{{\cal H}}$ to $\abs{{\cal H}_i}$. The bandit algorithm then receives a reward of $-\log \sqrt{1 - \gamma^{(t)}}$ clipped to $1$ so that the reward is bounded between $[0,1]$. 
The authors show that with sufficiently expressive scalar base learners, the training error goes to zero when using an adversarial bandit algorithm. In the next section, we will compare our algorithm to the best algorithm (EXP3.P) in \cite{Fekete10}.
\subsection{Additional Results}
In this subsection, we present the results for UCI pendigit and UCI letter datasets. These were omitted from the main body of the paper due to space constraints. Figures \ref{fig:pendigit_comparison} and \ref{fig:letter_comparison} compare the performance of various algorithms on the UCI pendigit dataset and UCI letter dataset respectively. Figures \ref{fig:pendigit_varying_params} and \ref{fig:letter_varying_params} compare the performance of ABE with varying hyperparameters for the UCI pendigit dataset and UCI letter dataset respectively.

\begin{figure*}[h] 
\centering
\vskip 0.0in
\begin{subfigure}[]{\columnwidth}
        \centering
        \includegraphics[scale = 0.35]{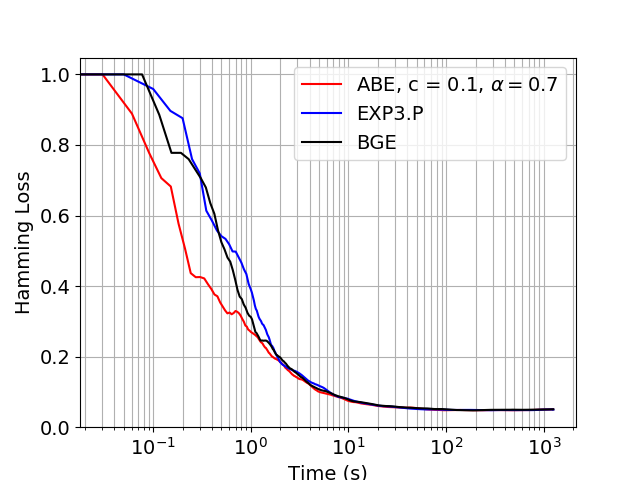}
        \caption{Pendigit}
        \label{fig:pendigit_comparison}
    \end{subfigure} %
    \begin{subfigure}[]{\columnwidth}
        \centering
        \includegraphics[scale=0.35]{letter_algorithms_comparison.png}
        \caption{Letter}
        \label{fig:letter_comparison} 
    \end{subfigure} 
    \caption{Comparison of different bandit algorithms for boosting.}
    \begin{subfigure}[]{\columnwidth}
        \centering
        \includegraphics[scale=0.35]{pendigits_varying_params.png}
        \caption{Pendigit}
        \label{fig:pendigit_varying_params}
    \end{subfigure} 
    \begin{subfigure}[]{\columnwidth}
        \centering
        \includegraphics[scale=0.35]{letter_varying_params.png}
        \caption{Letter}
        \label{fig:letter_varying_params}
    \end{subfigure} %
    \caption{Comparison of ABE with varying hyperparameters.}
\vskip -0.0in
\end{figure*}

\section{Structured MAB Experiments}
\subsection{Additional Details about Preprocessing for the Epinions Experiment}
As mentioned in the main body of the paper, to model graph-structured feedback, we follow the approach in \cite{Caron12} which appears to be the only major dataset evaluation of algorithms for MAB problems with such a feedback structure.
However, the Flixster dataset used in \cite{Caron12} is no longer available. So we used the Epinions dataset instead. The dataset was crawled in \cite{Epinions} and is publicly available at \url{https://www.cse.msu.edu/~tangjili/datasetcode/}. The dataset has been used in prior literature to test recommendation algorithms \cite{Jamali10, Ghenai16}. 

The dataset consists of ``trust relations" between two users and user-product ratings, where each user rates a product from $1$ to $5$. A trust relation defines a directed edge between two users. The following data preprocessing steps are similar to those in \cite{Caron12}: to estimate ratings that are not included in the dataset, we use a matrix completion algorithm \cite{Jamali10}. Given the complete user-pair rating matrix, we replace the ratings by $1$ if the ratings are above the average over all user-product ratings, and $0$ otherwise. Users who rate only a few products and products rated by only a few users generally provide noisy ratings. Therefore, we eliminate users who rate fewer than $30$ products and products rated by fewer than $30$ users. This results in $4,711$ users, $48,350$ edges, and $3,530$ products. A majority of the preprocessing code was provided by the authors of \cite{Caron12}.

We randomly partition the user-product matrix such that $6 / 7$ of the products represented in the user-product ratings matrix are used for validating hyperparameters for our algorithm and the remaining $1 / 7$ user-product pairs are used for testing all algorithms. This ratio of training and test data was motivated by the publicly available training and test MNIST datasets.

\end{document}